\title{Ambiguity set and learning via Bregman and Wasserstein}
\author{Xin Guo, Johnny Hong, Nan Yang}
\date{}
\newcommand{\EE}{\mathbb{E}} 
\newcommand{\X}{\mathcal{X}} 
\newcommand{\PP}{\mathbb{P}}
\newcommand{\QQ}{\mathbb{Q}} 
\newcommand{\RR}{\mathbb{R}}
\def\ed { \stackrel{d}{=} }
\def\convd { \stackrel{d}{\rightarrow} }
\newcommand{\ba}{\begin{array}}
\newcommand{\ea}{\end{array}}
\newcommand{\be}{\mathbb{E}}
\newcommand{\bp}{\mathbb{P}}
\newcommand{\XCal}{\mathcal{X}}
\newcommand{\bpm}{\begin{pmatrix}}
\newcommand{\epm}{\end{pmatrix}}
\newtheorem{definition}{Definition}[section]
\newtheorem{theorem}{Theorem}[section]
\newtheorem{corollary}{Corollary}[theorem]
\newtheorem{proposition}{Proposition}
\begin{document} 
\maketitle
\begin{abstract} 
Construction of ambiguity set in robust optimization relies on the choice of divergences between probability distributions.
In distribution learning, choosing appropriate probability distributions based on observed data is critical for approximating the true distribution.
To improve the performance of machine learning models, there has recently been interest in designing objective functions based on $L_p$-Wasserstein distance rather than the classical Kullback-Leibler (KL) divergence. In this paper, we derive concentration and asymptotic results using Bregman divergence. We propose a novel asymmetric statistical divergence called Wasserstein-Bregman divergence as a generalization of $L_2$-Wasserstein distance. We discuss how these results can be applied to the construction of ambiguity set in robust optimization.
\end{abstract} 

\section{Introduction}

Comparing probability distributions has been a recurring theme in many research areas of machine learning. 
In  distribution learning, for example, one is interested in approximating the true distribution by an element in a predetermined class of probability distributions, and this element is chosen based on the observed data.  Such choices rely on the divergence used in comparing distributions.
While there is an abundance in statistical divergences, there is no consensus about the ``ideal" way to measure the difference between distributions.

In the theory of robust optimization,  optimization problems are formulated under appropriate uncertainty sets for the model parameters and/or for the solutions against  a certain measure of robustness. For instance,  tractable uncertainty sets can be formulated in terms of chance constraints and expectation constraints under a given distribution $\PP$ \cite{Guan:2012ec}. However, when the distribution $\PP$ itself is unknown, which is the usual scenario in most data-driven research, the concept of ambiguity set is introduced \cite{Bayraksan:2015ge}. Thus, instead of optimizing under one particular distribution and  under a deterministic set, distributionally robust stochastic optimization, aka DRSO,  formulates optimization problems with a set of possible distributions, under the concept of ambiguity set. Specifically, one could consider minimizing the expected loss as follows,
\[
\min_{X\in\X}\max_{\PP\in\mathcal{P}}\EE_\PP [l(X;\xi)],
\]
where $X$ is the decision variable, allowed to vary inside the feasible region $\X$, and the random element $\xi$ follows distribution $\PP \in \mathcal{P}$, with $\mathcal{P}$ the ambiguity set and $l$ the loss function.

In the data driven setting where we have iid samples $\{\xi_i\}_{i=1}^n$ drawn from $\PP$, the ambiguity set $\mathcal{P}$ can be constructed so that it contains all distributions that are within a certain divergence from the empirical distribution, where the radius of the ambiguity set is large enough so that it contains $\PP$ with high probability. Alternative methods to construct ambiguity sets use moment constraints under $\PP \in \mathcal{P}$, where $\mathcal{P}$ consists of all probability distributions with first order and second order moments matching the sample moments. Again, the key is to define and measure the difference between various distributions.


In both the literature of learning and robust optimization, one popular choice to measure the difference between two distributions is the Kullback-Leibler divergence, which has strong theoretical foundation in information theory and large deviations \cite{Pardo:1999}. 
However, there are two issues in using the KL divergence. The first one is that the KL divergence between a continuous distribution and its empirical version, which is always a discrete distribution, is undefined (or infinite). The second issue is that KL divergence does not take into consideration the relative position of probability mass. As an example, consider the discrete distribution $\PP$ which puts $1/2$ mass on 0 and $1/2$ mass on 1, and the discrete distribution $\QQ$ which puts $1/2$ mass on $\epsilon$ and $1/2$ mass on $1-\epsilon$. The KL divergence does not reflect the convergence of $Q$ to $P$ as $\epsilon \downarrow 0$, hence it is too restrictive. It is therefore natural to use alternative measures for distributions, such as $f$-divergence, $L_p$-Wasserstein distance, and Prohorov metric. (See Section \ref{RelatedWork} for more details).

On the other hand, KL divergence belongs to a class of divergences known as \textit{Bregman divergences}. Bregman divergences \cite{Bregman:1967ab} are introduced by Lev Bregman in 1967 in solving a problem in convex optimization. Since its inception, Bregman divergences have found applications not only in convex optimization but also in statistics and machine learning, for example, clustering \cite{Lucic:2016uv} \cite{Banerjee:2005vsa}, inverse problems \cite{LeBesenerais:1999} \cite{Jones:1990}, classification \cite{Srivastava:2007}, logistic regression and AdaBoost \cite{Collins:2002wc} \cite{Murata:2004} \cite{Lafferty:1999}, regression \cite{Kivinen:2001}, mirror descent (\cite{Nemirovski:1983ab}), and generalized accelerated descent algorithms \cite{Wibisono:2016gx} \cite{Taskar:2006}. Bregman divergences are asymmetric in general, which can potentially be more desirable in the setting of comparing distributions, compared to a symmetric measure such as $L_p$-Wasserstein distance. 

Our goal is to address the following questions:

\begin{itemize}
\item How can we define appropriate divergences in the general setting of comparing distributions?
\item How can we define  appropriate divergences, in the particular context of robust optimization and distribution learning?
\end{itemize}

In this paper, we report some progress toward our goal. Our main contributions are as follows:
\begin{itemize}
\item We derive a weak convergence result using Bregman divergence in parametric distributions. The result describes precisely how the Hessian of the underlying convex function in Bregman divergence impacts the statistical properties of the divergence measure in the asymptotic setting.
\item In the non-asymptotic setting, we prove concentration results using Bregman divergence between the true discrete distribution and the empirical distributions. This allows the construction of ambiguity set in robust optimization.
\item We propose a novel statistical divergence called \textit{Wasserstein-Bregman divergence}, which is essentially a marriage between Wasserstein distance and Bregman divergence. We find that this divergence has the ability to capture the asymmetry in comparing distributions, while  retains nice analytical properties of Wasserstein distance for the purpose of optimization.
\end{itemize}

\subsection{Related Work}
\label{RelatedWork}
\paragraph{DRSO with KL Divergence.}
In \cite{Hu:2013tc}, they formulate a robust optimization problem in terms of a KL divergence constraint and show that the problem can be converted into a convex optimization problem which can be solved analytically. In
\cite{Guan:2012ec}, they show that chance constraints with KL divergence ambiguity sets can be reformulated into a traditional chance constraint problem with different risk levels.

\paragraph{DRSO with $L_p$-Wasserstein Distance.}
In \cite{Esfahani:2015wh}, they propose the use of $L_1$-Wasserstein ambiguity set. They show that Wassstein ambiguity sets provide a better out-of-sample guarantee than the KL divergence, because a continuous $\PP$ will always be outside the KL divergence ball centered at the empirical distribution $\hat{\PP}_n$, which is discrete, whereas the Wasserstein ball contains continuous as well as discrete distributions. They also show that the robust optimization problem, under some mild conditions, can be converted into a finite-dimensional convex programming problem, solvable in polynomial time. In \cite{Shafieezadeh-Abadeh:2015wr}, they use Wasserstein ambiguity set for distributionally robust logistic regression. Specifically they study $\inf_\beta \sup_{\PP \in \mathcal{P}} \EE_{\PP}[l_\beta(x,y)]$, where $l_\beta(x,y)$ is the logloss function with parameter $\beta$. They show that this problem has a tractable convex reformulation and provide confidence interval for the objective function, which is the out of sample performance guarantee. In \cite{Anonymous:W0HO4cob}, they use the $L_1$-Wasserstein ball as the ambiguity set. They show that the candidate probability distributions in the ball can be reduced to a subset whose elements can be described using extreme/exposed points of the set, hence a tractable reformulation of the original problem becomes possible.
In \cite{Gao:2016vo}, they consider the $L_p$-Wasserstein ball for $p \geq 1$, and give necessary and sufficient conditions for the worst-case distributions to exist. In \cite{Fournier:2015kk}, they inspect the convergence rate of the empirical distribution to the true distribution under Wasserstein distance.

\paragraph{Distribution Learning with $L_2$-Wasserstein Distance.}
In \cite{Arjovsky:2017vh}, they use neural network to learn probability density and define the objective function for optimization to be the $L_2$-Wasserstein distance. They have shown promising results on a numerical experiments in image generation.

\section{Background}
In this section, we will review definitions and relevant properties of Bregman divergence and Wasserstein distance.

\subsection{Bregman Divergence}
\begin{definition}
For two vectors $x$ and $y$ in $\RR^d$ and a strictly convex function $\phi(x): \RR^d \to \RR$, the \textit{Bregman divergence} is defined as
\[
D_\phi(x,y) = \phi(x) - \phi(y) - \langle \nabla \phi(y), x-y\rangle.
\]
\end{definition}

For two continuous distributions $\PP$ and $\QQ$, one can define Bregman divergence as in \cite{Jones:1990tf},
\begin{align*}
&D_\phi(\PP,\QQ) \\
&= \int\left[\phi(p(x)) - \phi(q(x)) - \phi'(q(x))(p(x)-q(x))\right] d\mu(x),
\end{align*}
where $p(x)$ and $q(x)$ are probability density functions of $\PP$ and $\QQ$ respectively, $\mu$ is the base measure, and $\phi: \RR \to \RR$ is a strictly convex function.

Examples of Bregman divergences include
\begin{itemize}
	\item $L^2$ loss: $D_\phi(x,y) = \|x-y\|_2^2$, where $\phi(x) = \|x\|_2^2$,
	\item Itakura-Saito divergence: $D_\phi(x,y) = x/y - \log(x/y) - 1$, where $\phi(x) = -\log x$,
	\item KL divergence: $D_\phi(x,y) = \sum_{i = 1}^d x_i \log (x_i/y_i)$, where $\phi(x) = \sum_{i=1}^d x_i \log x_i$,
	\item Mahalanobis distance: $D_\phi(x,y) = (x-y)^T A (x-y)$, where $\phi(x) = x^T A x$, $A$ is a strictly positive definite matrix.
\end{itemize}
As a divergence function, $D_\phi(x,y)$ is always nonnegative by the convexity of $\phi$. $D_\phi(x,y) = 0$ if and only if $x = y$. However, it is not a metric because it is not symmetric, and it does not satisfy the triangle inequality. In \cite{Pardo:2003fr}, they show an asymptotic equivalence between $f$-divergences (in particular, $\chi^2$-divergence) and Bregman divergences under some conditions.

\paragraph{$k$-means Using Bregman.}
In \cite{Banerjee:2005jd}, they show that conditional expectation is the optimal predictor for all Bregman divergences. Moreover, Bregman divergences are the only class of such loss functions. This property ensures the convergence of $k$-means algorithm when Bregman divergence is used as a loss function. 

\paragraph{Connections with Exponential Family.}
In \cite{Banerjee:2005vsa}, they show that there is a one-to-one correspondence between Bregman divergences and exponential family. That is, take an exponential family in a canonical form of:
\[
p_\theta(x) = \exp(\theta^Tx - \psi(\theta))h(x),
\]
where $\theta, x\in\RR^d$. $\psi$ is the cumulant function with its Legendre convex conjugate $\phi$ defined as
\[
\phi(x) = \sup_t [\langle x, t \rangle - \psi(t)].
\]
Then
\[
p_\theta(x) = \exp(-D_\phi(x,\mu(\theta)) - g_\phi(x)),
\]
with $\mu(\theta) = \nabla \psi(\theta)$. This one-to-one correspondence comes from the duality property of Bregman divergence, which states that
\[
D_\phi(p,q) = D_\phi(q^*, p^*),
\]
with $p^* = \nabla \phi(p)$ and $q^* = \nabla \phi(q)$.

\paragraph{Connections to Fisher Information.}
In the case where $X\sim p_\theta$ and $p_\theta$ belongs to a regular exponential family, the Fisher information of $\mu = \EE X$ has a nice representation. For notation simplicity, we present the result for the one-dimensional case. This result can be easily extended to higher dimensions.

\begin{proposition} \label{FisherInfo}
Suppose $X\sim p_{\theta}$ belongs to a regular exponential family. Let $\mu = \EE(X)$, $\psi$ be the cumulant function and $\phi$ be the convex conjugate of $\psi$. Assume that $\psi$ is three-time differentiable. Then
$$I(\mu) = \EE\left[\frac{\partial^2}{\partial \mu^2} D_{\phi}(x, \mu)\right] = \phi''(\mu).$$
\end{proposition}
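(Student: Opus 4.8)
The plan is to establish the two equalities separately, exploiting the mean parametrization of the exponential family and the correspondence between Bregman divergence and exponential families recalled above. Throughout I would work in the interior of the natural parameter space, where the map $\theta \mapsto \mu = \psi'(\theta)$ is a smooth bijection (so that $\mu$ is a legitimate reparametrization) and where the standard regularity of regular exponential families justifies differentiating under the integral sign.

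First I would handle the right-hand equality by a direct computation. From $D_\phi(x,\mu) = \phi(x) - \phi(\mu) - \phi'(\mu)(x-\mu)$, differentiating twice in $\mu$ gives
\[
\frac{\partial}{\partial \mu} D_\phi(x,\mu) = -\phi''(\mu)(x - \mu), \qquad \frac{\partial^2}{\partial \mu^2} D_\phi(x,\mu) = \phi''(\mu) - \phi'''(\mu)(x-\mu).
\]
Taking expectations and using $\EE[X] = \mu$, the term carrying $\phi'''$ vanishes and we obtain $\EE\!\left[\partial^2_\mu D_\phi(x,\mu)\right] = \phi''(\mu)$, which is the second assertion.

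It then remains to identify this quantity with the Fisher information $I(\mu)$. Here I would invoke the correspondence $p_\theta(x) = \exp(-D_\phi(x,\mu) - g_\phi(x))$ stated above, so that in the mean parametrization $\log p(x;\mu) = -D_\phi(x,\mu) - g_\phi(x)$ with $g_\phi$ independent of $\mu$. Since the standard information identity gives $I(\mu) = -\EE\!\left[\partial^2_\mu \log p(x;\mu)\right]$, and the $g_\phi$ term drops under $\partial_\mu$, this yields $I(\mu) = \EE\!\left[\partial^2_\mu D_\phi(x,\mu)\right]$, closing the chain. Equivalently, one may compute $I(\theta) = \psi''(\theta)$ directly and transform covariantly, $I(\mu) = I(\theta)\,(d\theta/d\mu)^2 = 1/\psi''(\theta) = \phi''(\mu)$, using that $\phi'(\mu) = \theta$ makes $\phi''$ and $\psi''$ reciprocal.

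The main obstacle is one of regularity rather than algebra: one must justify the interchange of expectation and $\partial_\mu^2$ in both the information identity and the expected-divergence computation, and confirm that the mean parametrization is admissible on the interior of the parameter space. Both follow from the assumed three-time differentiability of $\psi$ together with the dominated-convergence bounds available for regular exponential families, so the two displayed computations are licensed and the equalities hold.
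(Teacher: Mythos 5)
Your proposal is correct and follows essentially the same route as the paper: the identity $I(\mu) = \EE\left[\partial^2_\mu D_\phi(x,\mu)\right]$ is obtained from the representation $p_\theta(x) = \exp(-D_\phi(x,\mu) - g_\phi(x))$ together with the information identity, and the equality with $\phi''(\mu)$ comes from the same direct differentiation in which the $\phi'''(\mu)(x-\mu)$ term vanishes in expectation because $\EE[X] = \mu$. Your added remarks on regularity and the alternative covariant check $I(\mu) = I(\theta)\,(d\theta/d\mu)^2 = \phi''(\mu)$ are sound supplements but do not change the substance of the argument.
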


\begin{proof}
The first equality follows directly from the representation $p_\theta(x) = \exp(-D_{\phi}(x, \mu) - g_{\phi}(x))$. The second equality follows from a straightforward calculation,
\begin{equation*}
\begin{split}
\EE\left[\frac{\partial^2}{\partial \mu^2} D_{\phi}(x, \mu)\right] &= \EE\left[\frac{\partial^2}{\partial \mu^2} [\phi(x) - \phi(\mu) - \phi'(\mu)(x - \mu)]\right] \\
&= \EE\left[\frac{\partial}{\partial \mu}[-\phi''(\mu)(x - \mu)]\right]\\
&= \EE[-\phi'''(\mu)(x-\mu) + \phi''(\mu)]\\
&=\phi''(\mu).
\end{split}
\end{equation*}
\end{proof}

\paragraph{Bias-Variance Decomposition.}

In \cite{Buja05lossfunctions}, they show that expected Bregman divergence has a bias-variance decomposition
$$\EE D_\phi (\hat{\theta}, \theta) = D_\phi(\EE \hat{\theta}, \theta) + \EE D_\phi(\hat{\theta}, \EE \hat{\theta}).$$ 

Setting $\phi(x) = \| x \|_2^2$ recovers the usual bias-variance decomposition for squared-error loss,

\begin{align*}
&\EE D_\phi (\hat{\theta}, \theta) \\
&= \EE[(\hat{\theta} - \theta)^2] \\
&= (\EE \hat{\theta} - \theta)^2 + \EE[(\hat{\theta} - \theta)^2] \\
&= D_\phi(\EE \hat{\theta}, \theta) + \EE D_\phi(\hat{\theta}, \EE \hat{\theta}).
\end{align*}

Figure \ref{fig:kNN_tradeoff} shows how various choices can lead to different measures of bias-variance tradeoff in selecting the number of neighbors for $k$-nearest neighbor ($k$-NN) algorithm.

\begin{figure}[H]
\centering
\includegraphics[width=0.37\textwidth]{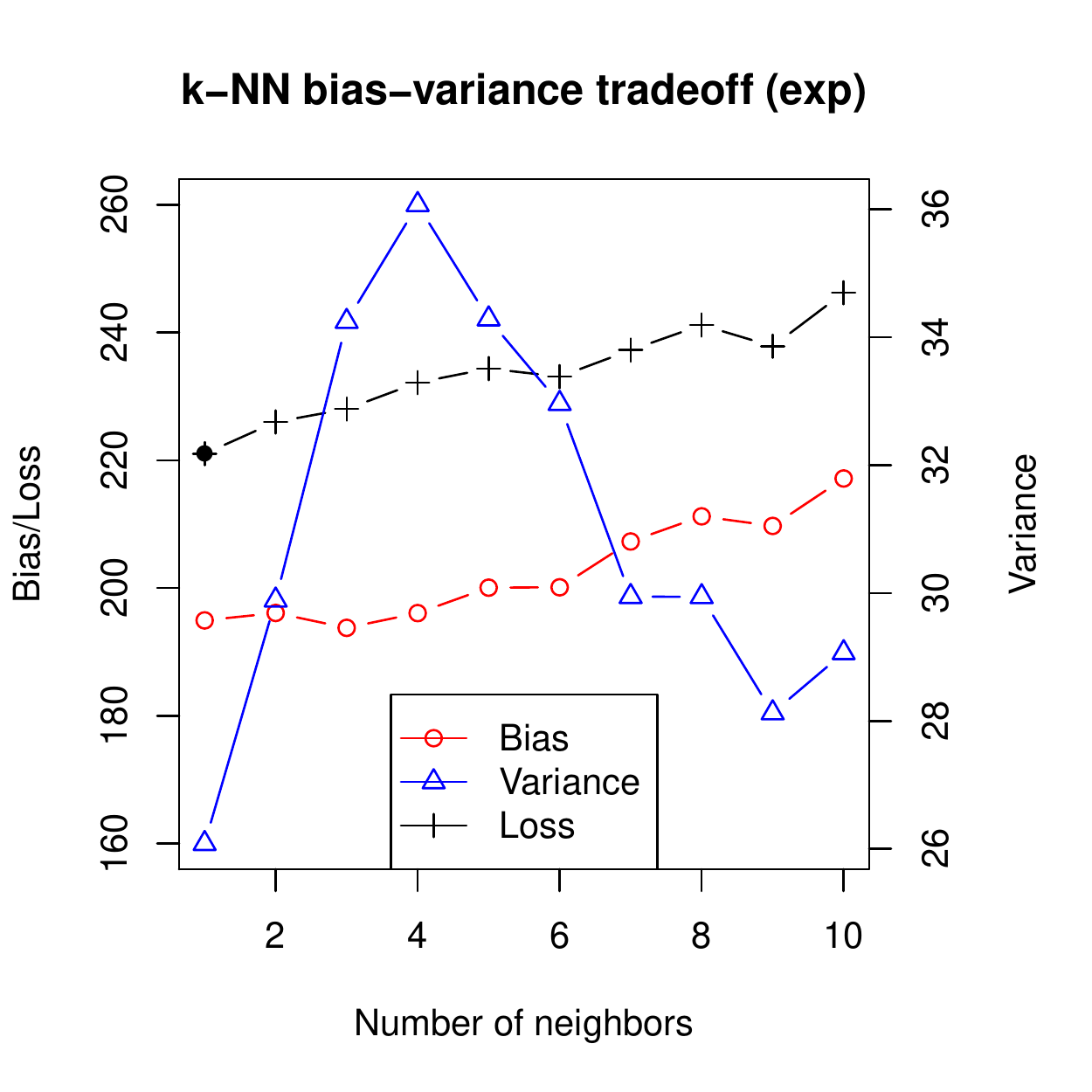}
\includegraphics[width=0.37\textwidth]{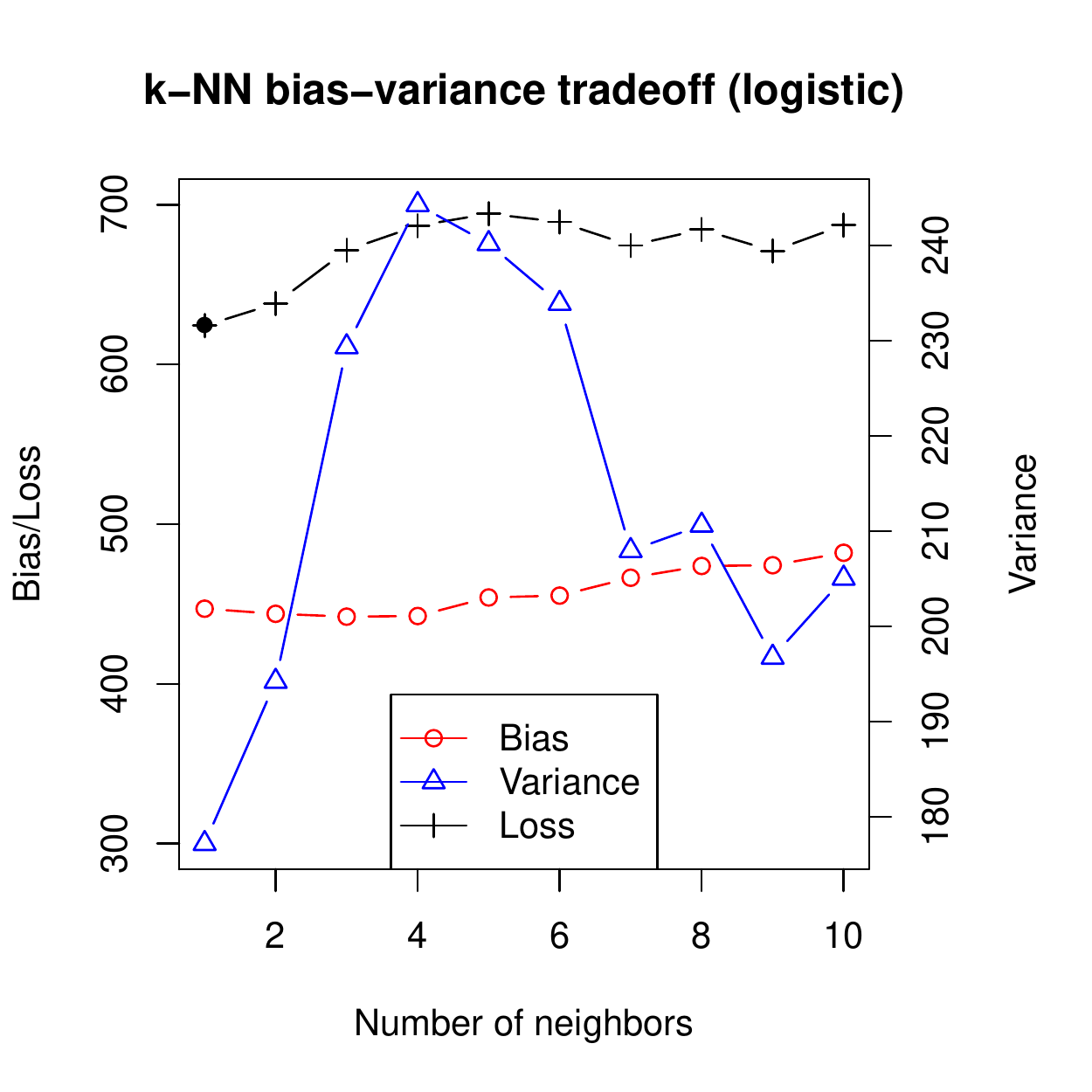}
    \caption{The bias-variance tradeoff of $k$-NN algorithm based on the loss functions $D_\phi(x, y) = e^x - e^y - e^y(x - y)$ and $D_\phi(x, y) = x \log \left(\frac{x}{y}\right) + (1 - x)\log\left(\frac{1 - x}{1 - y}\right)$. For each plot, the solid dot indicates the parameter that minimizes the corresponding loss function. The data used for this illustration is the \texttt{spam} dataset collected at Hewlett-Packard Labs, readily available in the R package \textit{kernlab}. }
    \label{fig:kNN_tradeoff}
\end{figure}

\subsection{Wasserstein Distance}

Wasserstein distance is a divergence defined between probability distributions on a given metric space. It is also known as Kantorovich-Monge-Rubinstein metric. Wasserstein distances are vastly used in optimal transport \cite{Villani:2009ab}, and they have found applications in many areas, such as the study of mixing for Markov chains in probability theory \cite{Dobrusin310:1996} \cite{Peres679:un}, rates of fluctuations for empirical measures in statistics \cite{Rachev695:1991} \cite{Rachev696:1998} \cite{Dobric307:1995}, and propagation of chaos in statistical mechanics \cite{Dobrusin308:1970} \cite{Spohn757:1991}.

\begin{definition}
For any two probability distributions $\PP$ and $\QQ$ defined on a compact metric space $(\mathcal{X}, d)$, the \textit{Wasserstein distance} between $\PP$ and $\QQ$ is defined as
\begin{align*}
	W_p(\PP, \QQ) &= \left( \inf_{\gamma \in \Pi(\PP,\QQ)} \int_{\mathcal{X} \times \mathcal{X}} d(x, y)^p d\gamma(x,y)\right)^{1/p},
\end{align*}
where $\Pi(\PP,\QQ)$ denotes the set of all couplings of $\PP$ and $\QQ$, i.e., all joint distributions defined on $\X \times \X$ with marginal distributions being $\PP$ and $\QQ$.
\end{definition}

For example, if $\mathcal{X} \subset \RR^d$, a natural choice of metric is $d(x, y) = \| x - y \|_p$. This leads to the definition of $L^p$-Wasserstein distance.

\begin{definition}
For any two probability distributions $\PP$ and $\QQ$ defined on a compact metric space $\mathcal{X} \subset \RR^d$, the \textit{Wasserstein distance of order $p\in[1,\infty]$} (or $L_p$-Wasserstein distance) between $\PP$ and $\QQ$ is defined as
\begin{align*}
	W_p(\PP, \QQ) &= \left( \inf_{\gamma \in \Pi(\PP,\QQ)} \int_{\mathcal{X} \times \mathcal{X}} \|x - y\|_p^p d\gamma(x,y)\right)^{1/p},
\end{align*}
where  $\Pi(\PP,\QQ)$ denotes the set of all couplings of $\PP$ and $\QQ$, i.e., all joint distributions defined on $\X \times \X$ with marginal distributions being $\PP$ and $\QQ$.
\end{definition}

For probability distributions, convergence under Wasserstein distance of order $p$ is equivalent to weak convergence plus convergence of the first $p$ moments.

For example, the Euclidean metric leads to the $L_2$-Wasserstein distance. The squared $L_2$-Wasserstein distance is defined as
\begin{align*}
	W_2(\PP, \QQ)^2 &= \inf_{\gamma \in \Pi(\PP,\QQ)} \int_{\mathcal{X} \times \mathcal{X}} \|x - y\|^2 d\gamma(x,y) \\
	&= \inf_{\gamma \in \Pi(\PP,\QQ)} \EE_{X,Y \sim \gamma} [\|X - Y\|_{L^2}] \\
	&= \EE_{X \sim \PP} [\|X\|_2^2] + \EE_{Y \sim \QQ} [\|Y\|_2^2] \\
	&\qquad+ \inf_{\gamma \in \Pi(\PP,\QQ} [\langle -2Y, X \rangle].
\end{align*}
Wasserstein distance is a metric, so it is always nonnegative, it is equal to 0 if and only if $\PP = \QQ$. It is symmetric, and it satisfies the triangle inequality.

\section{Main Results}
Let $p = (p_1, \ldots, p_d) \in \RR^d$ be the probability distribution of a discrete random variable $X$, where $p_i = \PP(X = a_k$, $k \in \{1,2,\ldots, d\}$. Let $\hat{p}_n = (\hat{p}_{n,1}, \ldots, \hat{p}_{n,d}) \in \RR^d$ be the random vector denoting the empirical distribution of a sequence of iid random variables $\{X_i\}_{i=1}^n$, where each $X_i$ has the same distribution as $X$. That is,
\[
	\hat{p}_{n} = \left(\frac{1}{n}\sum_{i=1}^n 1\{X_i = a_1\}, \ldots, \frac{1}{n}\sum_{i=1}^n 1\{X_i = a_d\}\right).
\]

\subsection{Concentration of Bregman Divergence}

We first establish that the Bregman divergence $D_\phi$ between the empirical distribution and the true distribution concentrates around the mean, where the rate can be expressed in terms of the gradient of the convex function $\phi$.

\begin{theorem} \label{nonconvexConcentration}
Consider the random variable $Z = D_\phi(\hat{p}_n, p)$, the Bregman divergence between $\hat{p}_n$ and $p$,
\[
Z = D_\phi(\hat{p}_{n}, p) = \phi(\hat{p}_{n}) - \phi(p) - \langle \nabla\phi(p), \hat{p}_{n} - p \rangle,
\]
where $\phi: [0,1]^d \to \RR$ is a strictly convex function. Then the following concentration inequality holds for all $\epsilon > 0$:
\[
	\PP\{Z - \EE [Z] \ge \epsilon \} \le \exp(\frac{-n^2\epsilon^2}{4dM_\phi}\}),
\]
where $M_\phi = \max_{t \in \Delta^{d-1}} \|\nabla\phi(t)\|_2$, and $\Delta^{d-1}$ is the standard $(d-1)$-simplex, which is the set $\{(t_1,t_2, \ldots, t_d) \in \RR^d| \sum_{i=1}^d t_i = 1, t_i \ge 0, \forall i\}$.
\end{theorem}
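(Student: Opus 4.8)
The plan is to recognize that $Z=D_\phi(\hat{p}_n,p)$ is a function $f(X_1,\dots,X_n)$ of the $n$ i.i.d.\ samples that depends on them only through the empirical vector $\hat{p}_n$, and then to control its fluctuations by a bounded-differences argument. First I would compute how much $Z$ can move when a single sample is resampled. If we change $X_i$ so that its atom switches from $a_j$ to $a_k$, then $\hat{p}_n$ is replaced by $\hat{p}_n' = \hat{p}_n + \tfrac{1}{n}(e_k-e_j)$, so the increment $\delta = \hat{p}_n'-\hat{p}_n$ satisfies $\|\delta\|_2 = \sqrt{2}/n$. Since the term $\langle\nabla\phi(p),\cdot\rangle$ is linear and $\phi(p)$ is constant in the samples, only $\phi(\hat{p}_n)$ and the linear term respond, and a mean-value expansion gives
\[
Z'-Z = \phi(\hat{p}_n')-\phi(\hat{p}_n) - \langle\nabla\phi(p),\delta\rangle = \langle \nabla\phi(\xi)-\nabla\phi(p),\,\delta\rangle
\]
for some $\xi$ on the segment joining $\hat{p}_n$ and $\hat{p}_n'$, which again lies in $\Delta^{d-1}$.

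Next I would bound this increment. The crude estimate uses only $\|\nabla\phi(\xi)-\nabla\phi(p)\|_2 \le 2M_\phi$ together with Cauchy--Schwarz and $\|\delta\|_2=\sqrt{2}/n$, giving $|Z'-Z|\le 2\sqrt{2}\,M_\phi/n =: c$. Feeding the constants $c_i=c$ into McDiarmid's bounded-differences inequality, where $\sum_i c_i^2 = 8M_\phi^2/n$, already yields a sub-Gaussian tail of the form $\PP\{Z-\EE Z\ge\epsilon\}\le\exp(-n\epsilon^2/(4M_\phi^2))$.

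To sharpen the linear-in-$n$ exponent to the quadratic one claimed, I would exploit that the above worst case --- where $\xi$ sits far from $p$ --- is atypical. On the high-probability event that $\|\hat{p}_n-p\|_2\lesssim 1/\sqrt{n}$, the point $\xi$ is also within $O(1/\sqrt{n})$ of $p$, so by smoothness of $\nabla\phi$ the factor $\|\nabla\phi(\xi)-\nabla\phi(p)\|_2$ is itself $O(1/\sqrt{n})$ rather than $O(1)$; the \emph{typical} bounded difference is thus $O(n^{-3/2})$, and summing $n$ of its squares gives $O(n^{-2})$. Applying the method of typical bounded differences of Warnke (with the crude $M_\phi/n$ bound controlling the negligible exceptional event) then upgrades the exponent from order $n$ to order $n^2$, producing the stated form.

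The main obstacle is precisely this last step: the deterministic McDiarmid constant is genuinely $\Theta(1/n)$, because $\hat{p}_n$ can, with tiny probability, sit far from $p$, so the $n^2$ rate cannot come from a naive bounded-differences estimate and instead requires the more delicate typical-bounded-differences machinery --- a quantitative high-probability bound on $\|\hat{p}_n-p\|_2$, a Lipschitz bound on $\nabla\phi$ over the simplex, and careful bookkeeping of the exceptional-event error terms so that they do not dominate. Tracking the exact constants, in particular the dependence on $d$ and the power of $M_\phi$, through this argument is the delicate accounting that remains.
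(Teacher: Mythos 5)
The constructive half of your proposal is exactly the paper's entire proof. The paper perturbs one sample, bounds the two resulting terms $|\phi(\hat{p}_n')-\phi(\hat{p}_n)|$ and $|\langle\nabla\phi(p),\hat{p}_n-\hat{p}_n'\rangle|$ each by $\sqrt{2}M_\phi/n$ (mean value theorem plus Cauchy--Schwarz, the same ingredients you combine into the single expression $\langle\nabla\phi(\xi)-\nabla\phi(p),\delta\rangle$), concludes $|Z-Z'|\le 2\sqrt{2}M_\phi/n$, and then \emph{asserts} the stated bound by invoking the bounded difference inequality. Your bookkeeping is more honest than the paper's: with $c_i=2\sqrt{2}M_\phi/n$ one has $\sum_i c_i^2=8M_\phi^2/n$, so McDiarmid gives $\PP\{Z-\EE Z\ge\epsilon\}\le\exp\bigl(-n\epsilon^2/(4M_\phi^2)\bigr)$, an exponent \emph{linear} in $n$; neither the factor $n^2$, nor the $d$, nor the first power of $M_\phi$ in the theorem's displayed bound follows from the paper's own argument. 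Up to this point you have reproduced, and correctly audited, everything the paper actually establishes.

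The gap is in your boosting step, and it cannot be repaired, because the claimed $n^2$-exponent inequality is false for fixed $\epsilon$ and large $n$. Take $d=2$ and $\phi(x)=\|x\|_2^2$, so that $Z=2(\hat{p}_{n,1}-p_1)^2$ with $n\hat{p}_{n,1}\sim\mathrm{Bin}(n,p_1)$ and $\EE Z\le 1/(2n)$. For a fixed small $\epsilon>0$, Cram\'er's theorem gives the lower bound $\PP\{Z-\EE Z\ge\epsilon\}\ge\exp(-C(\epsilon)\,n)$ for all large $n$, which eventually exceeds any bound of the form $\exp(-cn^2\epsilon^2)$; the true tail here is $\exp(-\Theta(n\epsilon))$, matching (up to constants) what McDiarmid already delivers. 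Concretely, the place where your typical-bounded-differences accounting breaks is the exceptional event: in Warnke's inequality $\PP(\Gamma^c)$ enters the final bound \emph{additively}, so the bound can never beat $\PP(\Gamma^c)$. With $\Gamma=\{\|\hat{p}_n-p\|_2\le C/\sqrt{n}\}$ the typical difference is indeed $O(n^{-3/2})$, but $\PP(\Gamma^c)$ converges to a positive constant (the Gaussian tail $\PP\{\|G\|>C\}$ under the CLT), not to something of size $\exp(-cn^2\epsilon^2)$. Forcing $\PP(\Gamma^c)\le\exp(-cn^2\epsilon^2)$ requires enlarging $\Gamma$ to $\{\|\hat{p}_n-p\|_2\lesssim\sqrt{n}\,\epsilon\}$, which inflates the typical differences to $O(\epsilon/\sqrt{n})$, makes $\sum_i c_i^2=\Theta(\epsilon^2)$, and collapses the exponent to a constant independent of $n$. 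This trade-off is intrinsic, not a matter of careful constant-tracking: the only statement salvageable from this line of attack is the linear-in-$n$ bound $\exp\bigl(-n\epsilon^2/(4M_\phi^2)\bigr)$, which is what both your first step and the paper's proof actually prove, the theorem's statement notwithstanding.
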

\begin{proof}
Let $(X_1, \ldots, X_{i-1}, X_i, X_{i+1}, \ldots, X_n)$ be iid random variables from distribution $p$. Define another sequence of random variables $(X_1, \ldots, X_{i-1}, X_i', X_{i+1}, \ldots, X_n)$, in which only the $i$-th element in the sequence is different. Let the corresponding empirical distribution be $\hat{p}_n'$. Then
\[
Z' = D_\phi(\hat{p}_{n}', p) = \phi(\hat{p}_{n}') - \phi(p) - \langle \nabla\phi(p), \hat{p}_{n}' - p \rangle.
\]
The difference of $Z$ and $Z'$ is
\begin{align*}
	Z' - Z = \phi(\hat{p}_{n}') - \phi(\hat{p}_{n}) + \langle \nabla\phi(p), \hat{p}_{n} - \hat{p}_n' \rangle.
\end{align*}
Notice that by construction, $\hat{p}_{n} - \hat{p}_n'$ is a vector with an element being $1/n$, an element being $-1/n$, and all other elements being zeros. Therefore by the Cauchy-Schwarz inequality,
\begin{align*}
|\langle \nabla\phi(p), \hat{p}_{n} - \hat{p}_n' \rangle| \le \|\nabla \phi(p)\|_2 \|\hat{p}_{n} - \hat{p}_n'\|_2 \\
= \frac{\sqrt{2}}{n} \|\nabla \phi(p)\|_2 \le \frac{\sqrt{2}}{n} M_\phi.
\end{align*}
Also by the Taylor's expansion,
\begin{align*}
\phi(\hat{p}_{n}') - \phi(\hat{p}_{n}) &= \|\nabla\phi(\xi)\|_2 \|\hat{p}_{n} - \hat{p}_n'\|_2 \\
&\le M_\phi \|\hat{p}_{n} - \hat{p}_n'\|_2 \\
&= \frac{\sqrt{2}}{n} M_\phi,
\end{align*}
where $\xi$ is a random vector which is a convex combination of $\hat{p}_{n}$ and $\hat{p}_{n}'$. Therefore by the triangle inequality,
\begin{align*}
	|Z - Z'| &\le |\langle \nabla\phi(p), \hat{p}_{n} - \hat{p}_n' \rangle| + |\phi(\hat{p}_{n}') - \phi(\hat{p}_{n})| \\
	&\le\frac{2\sqrt{2}}{n} M_\phi.
\end{align*}
Hence by the bounded difference inequality \cite{Talagrand772:1995},
\begin{align*}
	\PP\{Z - \EE [Z] \ge \epsilon \} &\le  \exp(\frac{-n^2\epsilon^2}{4dM_\phi}).
\end{align*}
\end{proof}

Notice that Bregman divergence is only convex with respect to its first argument, which in the previous case is $\hat{p}_n$. To construct a convex ambiguity region, we need to reverse the order of $\hat{p}$ and $p$ to make the unknown true distribution the first argument. Hence we also prove the following concentration inequality:

\begin{theorem} \label{convexConcentration}
Consider the random variable $Y = D_\phi(p,\hat{p}_n)$, the Bregman divergence between $p$ and $\hat{p}_n$:
\[
Y = D_\phi(p, \hat{p}_{n}) = \phi(p) - \phi(\hat{p}_{n}) - \langle \nabla\phi(\hat{p}_{n}), p - \hat{p}_{n}\rangle,
\]
where $\phi: [0,1]^d \to \RR$ is a strictly convex function. Then we have the following concentration inequality for all $\epsilon > 0$:
\[
	\PP(Y - \EE Y \ge \epsilon) \le \exp(-\frac{n^2\epsilon^2}{4d(M_\phi + L_\phi)^2}),
\]
where $L_\phi$ is the Lipschitz constant of $\nabla\phi$, and $M_\phi = \max_{t \in \Delta^{d-1}} \|\nabla\phi(t)\|$. $\Delta^{d-1}$ is the standard $(d-1)$-simplex, which is the set $\{(t_1,t_2, \ldots, t_d) \in \RR^d| \sum_{i=1}^d t_i = 1, t_i \ge 0, \forall i\}$.
\end{theorem}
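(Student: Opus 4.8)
The plan is to mirror the proof of Theorem \ref{nonconvexConcentration}: apply the bounded difference (McDiarmid) inequality to $Y$ viewed as a function of the iid samples $X_1,\dots,X_n$, after establishing that replacing a single sample $X_i$ by an independent copy $X_i'$ perturbs $Y$ by at most a controlled amount. Writing $\hat{p}_n'$ for the empirical distribution obtained after this replacement and $Y' = D_\phi(p,\hat{p}_n')$, the whole argument reduces to bounding $|Y-Y'|$ uniformly in $i$ and then invoking the same bounded difference inequality as before.

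First I would expand the difference. Since $Y-Y' = [\phi(\hat{p}_n')-\phi(\hat{p}_n)] + [\langle\nabla\phi(\hat{p}_n'),p-\hat{p}_n'\rangle - \langle\nabla\phi(\hat{p}_n),p-\hat{p}_n\rangle]$, the first bracket is handled exactly as in Theorem \ref{nonconvexConcentration}: a mean value / Taylor bound together with $\|\hat{p}_n-\hat{p}_n'\|_2 = \sqrt{2}/n$ gives $|\phi(\hat{p}_n')-\phi(\hat{p}_n)| \le \frac{\sqrt{2}}{n}M_\phi$. The new difficulty lies entirely in the second bracket, because here the gradient inside the inner product is evaluated at the \emph{changing} empirical distribution rather than at the fixed $p$, so it no longer telescopes cleanly as it did in the previous theorem.

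The key step, and the place where the Lipschitz constant $L_\phi$ enters, is to add and subtract a cross term to split the second bracket as
\[
\langle\nabla\phi(\hat{p}_n')-\nabla\phi(\hat{p}_n),\,p-\hat{p}_n'\rangle + \langle\nabla\phi(\hat{p}_n),\,\hat{p}_n-\hat{p}_n'\rangle.
\]
The second summand is bounded by the Cauchy-Schwarz inequality together with $\|\nabla\phi(\hat{p}_n)\|_2 \le M_\phi$, yielding $\frac{\sqrt{2}}{n}M_\phi$ as before. For the first summand I would invoke the Lipschitz continuity of $\nabla\phi$, namely $\|\nabla\phi(\hat{p}_n')-\nabla\phi(\hat{p}_n)\|_2 \le L_\phi\|\hat{p}_n'-\hat{p}_n\|_2 = \frac{\sqrt{2}}{n}L_\phi$, combined with the bound $\|p-\hat{p}_n'\|_2 \le \sqrt{2}$ coming from the diameter of $\Delta^{d-1}$. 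Collecting the three contributions gives a per-coordinate bounded difference of order $\frac{1}{n}(M_\phi+L_\phi)$, and feeding this into the bounded difference inequality produces the claimed exponential tail bound with $(M_\phi+L_\phi)^2$ in the denominator.

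I expect the main obstacle to be the control of the gradient-difference term: whereas fixing the first argument of the Bregman divergence in Theorem \ref{nonconvexConcentration} let the gradient factor out with only $M_\phi$, here one must quantify how much $\nabla\phi$ moves between $\hat{p}_n$ and $\hat{p}_n'$, which is precisely what $L_\phi$ measures. A secondary point to verify is that $\nabla\phi$ is indeed globally Lipschitz on the simplex (e.g. that $\phi$ is twice continuously differentiable with bounded Hessian there), so that $L_\phi$ is finite; this is what makes the bounded difference constant uniform in $i$ and hence makes McDiarmid applicable.
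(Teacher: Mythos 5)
Your proposal is correct and follows essentially the same route as the paper's proof: a McDiarmid bounded-difference argument where the $\phi$-difference is controlled by $M_\phi$ (as in Theorem \ref{nonconvexConcentration}) and the gradient term by the Lipschitz constant $L_\phi$ of $\nabla\phi$. Your particular add-and-subtract splitting of the inner-product term differs only in trivial algebra from the paper's (and in fact yields a marginally tighter per-coordinate bound, $\tfrac{2\sqrt{2}}{n}M_\phi + \tfrac{2}{n}L_\phi$ versus the paper's $\tfrac{2\sqrt{2}}{n}(M_\phi+L_\phi)$), so the same conclusion follows.
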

\begin{proof}

Let $(X_1, \ldots, X_{i-1}, X_i, X_{i+1}, \ldots, X_n)$ be iid random variables from distribution $p$. Define another sequence of random variables $(X_1, \ldots, X_{i-1}, X_i', X_{i+1}, \ldots, X_n)$, in which only the $i$-th element in the sequence is different. Let the corresponding emprical distirbution be $\hat{p}_n'$. Then
\[
Y' = D_\phi(p, \hat{p}_{n}') = \phi(p) - \phi(\hat{p}_{n}') - \langle \nabla\phi(\hat{p}_{n}'), p - \hat{p}_{n}'\rangle.
\]
The difference of $Y$ and $Y'$ is
\begin{align*}
	Y' - Y &= \phi(\hat{p}_{n}) - \phi(\hat{p}_{n}') \\
	&\qquad+ \langle \nabla\phi(\hat{p}_n), p - \hat{p}_{n} \rangle - \langle \nabla\phi(\hat{p}_n'), p - \hat{p}_{n}' \rangle.
\end{align*}
By the proof of Theorem \ref{nonconvexConcentration},
\[
	\phi(\hat{p}_{n}) - \phi(\hat{p}_{n}') \le \frac{\sqrt{2}}{n}M_\phi.
\]
Meanwhile
\begin{align*}
	\langle \nabla\phi(\hat{p}_n), p - &\hat{p}_{n} \rangle - \langle \nabla\phi(\hat{p}_n'), p - \hat{p}_{n}' \rangle \\
	&= \langle \nabla\phi(\hat{p}_n) - \nabla \phi(\hat{p}_n'), p \rangle \\
	&\qquad- \langle \nabla \phi(\hat{p}_n), \hat{p}_n \rangle + \langle \nabla \phi(\hat{p}_n'), \hat{p}_n' \rangle.
\end{align*}
Since $\nabla \phi$ is defined on the compact region $[0,1]^d$, we can assume without loss of generality that it has Lipschitz constant $L_\phi$. Then by the Cauchy-Schwarz inequality,
\begin{align*}
|\langle \nabla\phi(\hat{p}_n) - \nabla \phi(\hat{p}_n'), p \rangle| &\le \|p\|_2 \|\nabla\phi(\hat{p}_n) - \nabla\phi(\hat{p}_n')\|_2 \\
&\le L_\phi \|\hat{p}_n - \hat{p}_n'\|_2 = \frac{\sqrt{2}}{n}L_\phi,
\end{align*}
and similarly
\begin{align*}
	&\qquad|- \langle \nabla \phi(\hat{p}_n), \hat{p}_n \rangle + \langle \nabla \phi(\hat{p}_n'), \hat{p}_n' \rangle| \\
	&= |\langle \nabla \phi(\hat{p}_n), \hat{p}_n' - \hat{p}_n \rangle + \langle \nabla \phi(\hat{p}_n') - \nabla \phi(\hat{p}_n), \hat{p}_n' \rangle| \\
	&\le |\langle \nabla \phi(\hat{p}_n), \hat{p}_n' - \hat{p}_n \rangle| + |\langle\nabla \phi(\hat{p}_n') - \nabla \phi(\hat{p}_n), \hat{p}_n' \rangle| \\
	&\le \frac{\sqrt{2}}{n} M_{\phi} + \frac{\sqrt{2}}{n}L_\phi.
\end{align*}
Therefore
\[
|Y' - Y| \le 2\left(\frac{\sqrt{2}}{n} M_{\phi} + \frac{\sqrt{2}}{n}L_\phi\right).
\]
By the bounded difference inequality,
\[
\PP(Y - \EE Y \ge \epsilon) \le \exp(-\frac{n^2\epsilon^2}{4d(M_\phi + L_\phi)^2}).
\]
\end{proof}

\subsection{Weak Convergence of Bregman Divergence}
In this section, we will show that in the asymptotic case, Bregman divergence between the true parameters of a distribution and the maximum likelihood estimator of the parameters will converge in distribution to a finite weighted sum of independent $\chi^2$ distributed random variables. This result allows us to construct asymptotic ambiguity sets according to the quantiles of the asymptotic distribution.
\begin{theorem}
 	Suppose there exists a family of probability distributions $\PP_\theta$ parametrized by $\theta\in\Theta\subset\RR^d$. Suppose we have iid data $\{X_i\}_{i=1}^n$, and $\hat{\theta}_n$ is the maximum likelihood estimator of $\theta$. Then
 	\[
\lim_{n \to \infty} n D_{\phi}(\theta, \hat{\theta}_n) \convd \frac{1}{2} \sum_{i = 1}^r \beta_i Z_i^2,
	\]
	where $Z_i$'s are independent standard Gaussian random variables, $D_\phi$ denotes the Bregman divergence characterized by $\phi$, $\beta_i$'s are the non-zero eigenvalues of the matrix $H\Sigma$ and $r = rank(\Sigma^T H \Sigma)$, with $H$ the Hessian of $\phi$ at $\theta$ and $\Sigma$ the inverse Fisher information matrix.
\end{theorem}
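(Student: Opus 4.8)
The plan is to combine a second-order Taylor expansion of the Bregman divergence about the true parameter $\theta$ with the classical asymptotic normality of the maximum likelihood estimator, and then to analyze the resulting quadratic form in a Gaussian vector. Writing $\Delta_n = \hat{\theta}_n - \theta$ and using the definition $D_\phi(\theta,\hat\theta_n) = \phi(\theta) - \phi(\hat\theta_n) + \langle \nabla\phi(\hat\theta_n), \Delta_n\rangle$, I would Taylor-expand both $\phi(\hat\theta_n)$ and $\nabla\phi(\hat\theta_n)$ about $\theta$ (assuming $\phi$ is $C^2$ near $\theta$, with Hessian $H = \nabla^2\phi(\theta)$). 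The zeroth- and first-order contributions cancel, leaving
\[
D_\phi(\theta, \hat\theta_n) = \tfrac{1}{2}\,\Delta_n^\T H \Delta_n + o(\|\Delta_n\|_2^2),
\]
so that $n D_\phi(\theta,\hat\theta_n) = \tfrac12 (\sqrt n\,\Delta_n)^\T H (\sqrt n \Delta_n) + n\cdot o(\|\Delta_n\|_2^2)$.

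Next I would invoke the MLE asymptotics. Under the standard regularity conditions, $\sqrt n\, \Delta_n = \sqrt n(\hat\theta_n - \theta) \convd N(0,\Sigma)$ with $\Sigma = I(\theta)^{-1}$ the inverse Fisher information matrix; in particular $\|\Delta_n\|_2^2 = O_P(1/n)$. The remainder then satisfies $n\cdot o(\|\Delta_n\|_2^2) = (n\|\Delta_n\|_2^2)\cdot o_P(1) = O_P(1)\cdot o_P(1) = o_P(1)$. Applying the continuous mapping theorem to the map $w \mapsto \tfrac12 w^\T H w$, together with Slutsky's theorem to absorb the vanishing remainder, I obtain
\[
n D_\phi(\theta, \hat\theta_n) \convd \tfrac12\, W^\T H W, \qquad W\sim N(0,\Sigma).
\]

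Finally I would diagonalize the Gaussian quadratic form. Writing $W = \Sigma^{1/2} Z$ with $Z \sim N(0,I_d)$ and spectrally decomposing the symmetric matrix $\Sigma^{1/2} H \Sigma^{1/2} = U \Lambda U^\T$, the orthogonal change of variables $\tilde Z = U^\T Z \sim N(0,I_d)$ gives $W^\T H W = \sum_i \lambda_i \tilde Z_i^2$, where the $\lambda_i$ are the eigenvalues of $\Sigma^{1/2} H \Sigma^{1/2}$. Since $\Sigma^{1/2}H\Sigma^{1/2} = \Sigma^{-1/2}(H\Sigma)\Sigma^{1/2}$ is similar to $H\Sigma$, these eigenvalues coincide with the $\beta_i$, and retaining only the $r = \rank(\Sigma^\T H \Sigma)$ nonzero ones yields exactly $\tfrac12\sum_{i=1}^r \beta_i Z_i^2$; this last step can equally be read off from the general theory of quadratic forms in normal variables \cite{Dik:1985ws}.

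The main obstacle is the rigorous control of the Taylor remainder after multiplication by $n$: one must check that the little-$o$ term is genuinely $o_P$ at the right rate, which relies on the $\sqrt n$-consistency of the MLE and on enough smoothness of $\phi$ (continuity of the Hessian at $\theta$, and ideally a bounded third derivative to control the remainder uniformly on a shrinking neighborhood of $\theta$). A secondary subtlety is the bookkeeping of the rank condition when $\Sigma$ or $H$ is singular, so that the number of nonzero $\beta_i$ matches $r = \rank(\Sigma^\T H \Sigma)$; when $\Sigma$ is nonsingular this reduces to $\rank(H)$, and the general case follows from the similarity argument above.
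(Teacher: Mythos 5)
Your proof follows essentially the same route as the paper's: a second-order Taylor expansion reducing $nD_\phi(\theta,\hat\theta_n)$ to the quadratic form $\tfrac12(\sqrt{n}\Delta_n)^\T H(\sqrt{n}\Delta_n)$ plus a vanishing remainder (the paper expands $\phi(\theta)$ about $\hat\theta_n$ and so also needs $H(\hat\theta_n)\to H(\theta)$ in probability, whereas your expansion about $\theta$ sidesteps this), followed by MLE asymptotic normality with Slutsky/continuous mapping, and then diagonalization of the Gaussian quadratic form exactly as in the paper's square-root-of-$\Sigma$ argument. One minor algebraic slip: the similarity identity should read $\Sigma^{1/2}H\Sigma^{1/2}=\Sigma^{1/2}(H\Sigma)\Sigma^{-1/2}$ rather than $\Sigma^{-1/2}(H\Sigma)\Sigma^{1/2}$, though your conclusion that these eigenvalues coincide with those of $H\Sigma$ is correct.
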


\begin{proof}
First, write the Taylor expansion of $\phi$ around $\hat{\theta}_n$,
\begin{align*}
	\phi(\theta) &= \phi(\hat{\theta}_n) + \langle \theta - \hat{\theta}_n, \nabla \phi(\hat{\theta}_n) \rangle \\
	&\qquad+ \frac{1}{2}(\theta - \hat{\theta}_n)^T H(\hat{\theta}_n) (\theta - \hat{\theta}_n) + o(\|\theta - \hat{\theta}_n\|_2^2),
\end{align*}
where $H(\hat{\theta})$ is the Hessian of $\phi(x)$ at $x = \hat{\theta}$. Notice that by the properties of maximum likelihood estimators, as $n \to \infty$,
\[
\sqrt{n}(\theta - \hat{\theta}_n) \convd N(0,\mathcal{I}^{-1}) \ed N(0,\Sigma),
\]
where 
\[
(\mathcal{I})_{ij}= -\EE \frac{\partial ^2 \log L}{\partial \theta_i \partial \theta_j}
\]
is the Fisher information matrix of the underlying true distribution, with $L$ being the likelihood function. Also,
\[
H(\hat{\theta}_n) \to H(\theta)
\]
in probability, and
\[
n \cdot o(\|\theta - \hat{\theta}_n\|_2^2) \to 0 
\]
in probability. Therefore by the Slutsky's theorem,
\begin{align*}
	nD_{\phi}(\theta, \hat{\theta}_n) &= n(\phi(\theta) - \phi(\hat{\theta}_n) - \langle \theta - \hat{\theta}_n, \nabla \phi(\hat{\theta}_n) \rangle) \\
	&= \frac{1}{2}\sqrt{n}(\theta - \hat{\theta}_n)^T H \sqrt{n}(\theta - \hat{\theta}_n) \\
	&\qquad+ n\cdot o(\|\theta - \hat{\theta}_n\|_2^2) \\
	&\convd \frac{1}{2} X^T H X,
\end{align*}
where $X \ed N(0,\Sigma)$. Let $S \in \RR^{d\times s}$ be a square root of $\Sigma$. Since $\Sigma$ and $H$ are positive semidefinite, by spectral theorem, we can write $S^THS = R^T\Lambda R$, where $\Lambda = diag(\beta_1, \ldots, \beta_r)$, which is the diagonal matrix of non-zero eigenvalues of $S^THS$, hence is also the diagonal matrix of non-zero eigenvalues of $H\Sigma$, $r = rank(\Sigma H \Sigma)$, and $R$ is the matrix of corresponding orthonormal eigenvectors. Then
\begin{align*}
	X^T H X  &\ed (SY)^T H SY \ed Y^T R^T\Lambda R Y \\
	&\ed Z^T \Lambda Z = \sum_{i=1}^r \beta_i Z_i^2,
\end{align*}
where $Z_i$ are independent standard Gaussian random variables. Therefore, we have the quadratic form of Gaussian variables
\[
\sqrt{n}(\theta - \hat{\theta}_n)^T H \sqrt{n}(\theta - \hat{\theta}_n) \ed \sum_{i = 1}^r \beta_i Z_i^2.
\]
This completes the proof.
\end{proof}

\textit{Remark}: Even though Bregman divergence is asymmetric, $n D_{\phi}(\hat{p}_n, p)$ has the same asymptotic distribution as $n D_{\phi}(p,\hat{p}_n)$ by a similar proof.

Noting that $\hat{p}_n$ is the maximum likelihood estimator of $p$, we immediately arrive at the following corollary.

\begin{corollary} \label{asym}
For a discrete distribution $p = (p_1, \ldots, p_d)$ and the empirical distribution $\hat{p}_n = (\hat{p}_{n,1}, \ldots, \hat{p}_{n,d})$ generated from $n$ iid samples, we have 
	\[
\lim_{n \to \infty} n D_{\phi}(p,\hat{p}_n) \convd \frac{1}{2} \sum_{i = 1}^r \beta_i Z_i^2,
	\]
where $Z_i$ are independent standard Gaussian random variables, $r = rank(\Sigma^T H \Sigma)$, $H$ is the Hessian of $\phi$, $\Sigma$ is the inverse Fisher information matrix, and $\beta_1, \ldots, \beta_r$ are the nonzero eigenvalues of $H\Sigma$.
\end{corollary}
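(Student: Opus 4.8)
The plan is to obtain this as a direct specialization of the preceding weak-convergence theorem, once the empirical distribution is identified as the maximum likelihood estimator of $p$ and the asymptotic covariance is computed. First I would verify the MLE claim: for the categorical model with parameter $p=(p_1,\ldots,p_d)$ on the simplex, the log-likelihood of the iid sample is $\sum_k n_k \log p_k$, where $n_k$ is the number of observations equal to $a_k$, and maximizing subject to $\sum_k p_k = 1$ through a single Lagrange multiplier yields $\hat{p}_{n,k} = n_k/n$. This is exactly the empirical distribution, so the hypotheses of the theorem hold with $\theta = p$ and $\hat{\theta}_n = \hat{p}_n$.

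Next I would pin down $\Sigma$. Rather than invoke the general MLE asymptotic-normality machinery, which presumes a nonsingular Fisher information, it is cleaner to observe that $\hat{p}_n$ is an average of iid indicator vectors $e_{X_i}\in\RR^d$, so the multivariate central limit theorem gives $\sqrt{n}(\hat{p}_n - p) \convd N(0,\Sigma)$ with single-trial covariance $\Sigma = \Diag(p) - p p^\T$. This $\Sigma$ is precisely the (generalized) inverse Fisher information on the tangent space of the simplex, matching the notation of the statement. With this convergence in hand, the argument of the theorem applies verbatim: Taylor-expand $\phi$ about $\hat{p}_n$, use $n\cdot o(\|p-\hat{p}_n\|_2^2)\to 0$ and $H(\hat{p}_n)\to H(p)$ in probability, and conclude by Slutsky's theorem together with the continuous mapping theorem that $n D_\phi(p,\hat{p}_n)\convd \tfrac12 X^\T H X$ with $X\sim N(0,\Sigma)$. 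The spectral reduction of $X^\T H X$ to $\sum_{i=1}^r \beta_i Z_i^2$, where the $\beta_i$ are the nonzero eigenvalues of $H\Sigma$ and $r=\rank(\Sigma^\T H\Sigma)$, is identical to the one already performed. The Remark then guarantees that using $D_\phi(p,\hat{p}_n)$ in place of $D_\phi(\hat{p}_n,p)$ leaves the limit unchanged.

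The main obstacle, and the one point deserving care, is that $\Sigma = \Diag(p) - p p^\T$ is singular of rank $d-1$ because of the constraint $\sum_k p_k = 1$, whereas the general theorem reads most naturally as assuming an invertible Fisher information. I would emphasize that the theorem's proof never actually uses invertibility: it relies only on a square-root factorization $\Sigma = S S^\T$ and the spectral decomposition of $S^\T H S$, both valid for any positive semidefinite $\Sigma$, and the rank parameter $r$ is exactly what absorbs the degeneracy. Hence the singularity is harmless and is already encoded in the appearance of $r=\rank(\Sigma^\T H\Sigma)$ in the conclusion. Invoking the plain central limit theorem for $\hat{p}_n$ rather than constrained-MLE asymptotics sidesteps the parametrization and identifiability subtleties cleanly, so no separate degenerate-MLE theory is required.
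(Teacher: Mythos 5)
Your proposal is correct and follows the same overall route as the paper: the corollary is obtained by specializing the preceding theorem with $\theta = p$ and $\hat{\theta}_n = \hat{p}_n$, using the fact that the empirical distribution is the maximum likelihood estimator of $p$. Indeed, the paper's entire proof is that one-line observation. Where you genuinely add something is in how the asymptotic covariance $\Sigma$ is identified. The paper implicitly relies on the MLE asymptotic-normality step inside the theorem, which presumes an invertible Fisher information; but for the full $d$-dimensional categorical parametrization the Fisher information is singular, since the constraint $\sum_k p_k = 1$ forces $\Sigma = \Diag(p) - pp^\T$ to have rank $d-1$, so ``the inverse Fisher information matrix'' is not literally well defined. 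Your substitution of the multivariate CLT for the iid indicator vectors $e_{X_i}$ in place of MLE machinery, together with the observation that the theorem's proof never uses invertibility of $\Sigma$ --- only a (possibly rectangular) square-root factorization $\Sigma = SS^\T$ and the spectral decomposition of $S^\T H S$, with the rank parameter $r$ absorbing the degeneracy --- patches this gap cleanly. One small inaccuracy: you invoke the Remark to swap the argument order, but this is unnecessary, since the theorem is already stated for $D_\phi(\theta, \hat{\theta}_n)$ with the true parameter first, exactly matching $D_\phi(p, \hat{p}_n)$ in the corollary. In short, your proof has the same skeleton as the paper's but is the more careful of the two: the paper's immediate appeal to MLE theory silently glosses over the singularity that your CLT argument resolves.
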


\subsection{Wasserstein-Bregman Divergence}
In this section, we first define the new Wasserstein-Bregman divergence between probability distributions. We then show that under some mild conditions, the divergence function is differentiable with respect to the parameters almost everywhere. This result allows the gradient descent algorithm to minimize the divergence between a target distribution and a parametric distribution. Therefore, Wasserstein-Bregman divergence can be used as an objective function in distribution learning.

\begin{definition} Let $\phi: \RR^d \to \RR$ be a strictly convex function and $D_\phi: \RR^d \times \RR^d \to \RR$ be the associated Bregman divergence with $D_\phi(x,y) = \phi(x) - \phi(y) - \langle \nabla \phi(y), x-y\rangle$. \textit{Wasserstein-Bregman divergence} $W_{D_\phi} (\PP, \QQ)$ is defined as
\begin{align*}
W_{D_\phi} (\PP, \QQ) &= \inf_{\gamma \in \Pi(\PP,\QQ)} \int D_\phi(x,y) d\gamma(x,y) \\
&= \inf_{\gamma \in \Pi(\PP,\QQ)} \EE_{X,Y \sim \gamma} [D_\phi(X,Y)].
\end{align*}
\end{definition}

As an example, if $\phi(x) = \|x\|^2$, $W_{D_\phi} (\PP, \QQ)$ reduces to $W_2 (\PP, \QQ)^2$.

By the nonnegativity of Bregman divergence, it is easy to verify that $W_{D_\phi} (\PP, \QQ)$ is always nonnegative, and $W_{D_\phi} (\PP, \QQ)= 0$ if and only if $\PP = \QQ$.

\begin{theorem}
	Let $\mathcal{X} \subset \RR^d $ be a compact metric set, $\QQ$ is a fixed distribution defined on $\mathcal{X}$, $g_\theta(Z)$ is a function of $Z$, with parameter $\theta \in \RR^d$, and $Z$ being a random variable over another space $\mathcal{Z}$. Let $\PP_\theta$ denote the distribution of $g_\theta(Z)$. Then
\begin{enumerate}
	\item If $g$ is continuous in $\theta$, then $W_{D_\phi}(\QQ, \PP_\theta)$ is also continuous in $\theta$.
	\item If $g$ is locally Lipschitz with local Lipschitz constants $L(\theta,z)$ such that $\EE_{Z\sim \PP_\theta} L(\theta, Z)^2 < \infty$, then $W_{D_\phi}(\QQ, \PP_\theta)$ is differentiable almost everywhere.
\end{enumerate}
\end{theorem}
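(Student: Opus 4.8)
The plan is to follow the template of the Wasserstein-GAN continuity argument, reducing both claims to a single Lipschitz-type estimate in $\theta$. Concretely, I would show there is a constant $C>0$, depending only on $\phi$ and $\mathcal{X}$, such that $|W_{D_\phi}(\QQ,\PP_\theta)-W_{D_\phi}(\QQ,\PP_{\theta'})|\le C\,\EE_{Z}\|g_\theta(Z)-g_{\theta'}(Z)\|$. Once this master bound is in hand, Part 1 follows from continuity of $g$ plus dominated convergence, and Part 2 follows from the local Lipschitz hypothesis together with Rademacher's theorem. The one feature that makes this harder than the classical $W_2$ case is that $D_\phi$ is asymmetric and not a metric, so I cannot simply invoke a triangle inequality for $W_{D_\phi}$.

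To establish the master bound I would replace the missing triangle inequality by the gluing lemma. Take an optimal coupling $\pi$ of $(\QQ,\PP_\theta)$ for the cost $D_\phi$, which exists because $\mathcal{X}$ is compact and $D_\phi$ is continuous, together with the synchronous coupling $Z\mapsto(g_\theta(Z),g_{\theta'}(Z))$ of $(\PP_\theta,\PP_{\theta'})$. Gluing these along the common marginal $\PP_\theta$ yields a joint law of $(W,X,Y)$ with $W\sim\QQ$, $X=g_\theta(Z)\sim\PP_\theta$, $Y=g_{\theta'}(Z)\sim\PP_{\theta'}$, and $(W,X)\sim\pi$. Since $(W,Y)$ is then a coupling of $(\QQ,\PP_{\theta'})$, I obtain $W_{D_\phi}(\QQ,\PP_{\theta'})\le\EE[D_\phi(W,Y)]$, and therefore $W_{D_\phi}(\QQ,\PP_{\theta'})-W_{D_\phi}(\QQ,\PP_\theta)\le\EE[D_\phi(W,Y)-D_\phi(W,X)]$.

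Next I would estimate the integrand pointwise. Expanding the definition and regrouping gives $D_\phi(w,y)-D_\phi(w,x)=[\phi(x)-\phi(y)]+\langle\nabla\phi(x)-\nabla\phi(y),\,w-x\rangle-\langle\nabla\phi(y),\,x-y\rangle$. On the compact set $\mathcal{X}$ the convex $\phi$ is Lipschitz with constant controlled by $M_\phi$, its gradient is bounded by $M_\phi$ and Lipschitz with constant $L_\phi$, so by Cauchy-Schwarz each of the three terms is bounded by a constant multiple of $\|x-y\|$, giving $|D_\phi(w,y)-D_\phi(w,x)|\le C\|x-y\|$ with $C=2M_\phi+L_\phi\,\mathrm{diam}(\mathcal{X})$. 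Taking expectations yields one side of the master bound; running the same gluing construction with $\theta$ and $\theta'$ interchanged yields the other side, so $|W_{D_\phi}(\QQ,\PP_\theta)-W_{D_\phi}(\QQ,\PP_{\theta'})|\le C\,\EE_{Z}\|g_\theta(Z)-g_{\theta'}(Z)\|$.

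Finally I would conclude. For Part 1, continuity of $g$ in $\theta$ gives $g_{\theta'}(Z)\to g_\theta(Z)$ pointwise as $\theta'\to\theta$, and since $\mathcal{X}$ is compact the integrand is uniformly bounded, so dominated convergence forces $\EE_{Z}\|g_\theta(Z)-g_{\theta'}(Z)\|\to0$ and hence continuity of $\theta\mapsto W_{D_\phi}(\QQ,\PP_\theta)$. For Part 2, the local Lipschitz hypothesis gives $\|g_{\theta'}(Z)-g_\theta(Z)\|\le L(\theta,Z)\|\theta'-\theta\|$ for $\theta'$ near $\theta$, and $\EE_{Z}L(\theta,Z)^2<\infty$ implies $\EE_{Z}L(\theta,Z)<\infty$ by Cauchy-Schwarz; combined with the master bound this makes $\theta\mapsto W_{D_\phi}(\QQ,\PP_\theta)$ locally Lipschitz on $\RR^d$, so it is differentiable almost everywhere by Rademacher's theorem. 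I expect the main obstacle to be precisely the first step, namely substituting the gluing construction and the explicit smoothness estimate of $D_\phi$ for the triangle inequality that is unavailable here; the existence and measurability of the optimal couplings and the limiting arguments are then routine consequences of the compactness of $\mathcal{X}$.
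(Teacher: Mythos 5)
Your proof is correct, and it takes a genuinely different route from the paper's. The paper exploits the algebraic structure of the Bregman cost: the only coupling-dependent part of $\int D_\phi\,d\gamma$ is the bilinear term $\int \langle -\nabla\phi(y),x\rangle\,d\gamma$, so it derives the identity $W_{D_\phi}(\QQ,\PP_\theta)=\frac{1}{2}W_2\left(\QQ,\PP_\theta\circ(\nabla\phi)^{-1}\right)^2$ plus coupling-free correction terms, and then runs the Wasserstein-GAN argument (synchronous coupling, triangle inequality for the metric $W_2$, bounded convergence, Rademacher) with $\nabla\phi\circ g_\theta$ playing the role of the generator. That reduction requires $\nabla\phi$ to be invertible (the paper invokes the inverse function theorem, so effectively $\phi\in C^2$ with positive definite Hessian), and it quietly leaves unexamined the correction terms, which still depend on $\theta$. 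You instead work directly with $W_{D_\phi}$: lacking a triangle inequality, you glue an optimal coupling of $(\QQ,\PP_\theta)$ to the synchronous coupling of $(\PP_\theta,\PP_{\theta'})$ and control the pointwise difference $|D_\phi(w,y)-D_\phi(w,x)|\le\left(2M_\phi+L_\phi\,\mathrm{diam}(\mathcal{X})\right)\|x-y\|$ (your regrouping of this difference is correct), which yields the master bound $|W_{D_\phi}(\QQ,\PP_\theta)-W_{D_\phi}(\QQ,\PP_{\theta'})|\le C\,\EE_Z\|g_\theta(Z)-g_{\theta'}(Z)\|$. Your route buys three things: it never needs $(\nabla\phi)^{-1}$, only boundedness and Lipschitzness of $\nabla\phi$ on the compact set, which is the same assumption the paper already makes in the proof of Theorem \ref{convexConcentration}; it handles all the $\theta$-dependence in a single estimate, thereby closing the gap about the correction terms; and it only requires the first moment $\EE_Z L(\theta,Z)<\infty$ (which you correctly recover from the stated second-moment hypothesis via Cauchy--Schwarz) rather than the second moment. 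What the paper's route buys in exchange is the structural decomposition $W_{D_\phi}=D+P$ itself, which is of independent interest and is reused in its Discussion section. One caveat you share with the paper (both inherit it from the Wasserstein-GAN proof): the neighborhood on which $\|g_\theta(z)-g_{\theta'}(z)\|\le L(\theta,z)\|\theta-\theta'\|$ holds depends on $z$, so taking expectations for a fixed nearby $\theta'$ strictly requires an additional uniformity or Fatou-type argument; since the paper's own proof glosses over the same point, this is not a gap specific to your approach.
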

Remark: in this theorem, $g_\theta(Z)\sim \PP_\theta$ is the parametric distribution that attempts to replicate the distribution $\QQ$.
\begin{proof}
Because $\phi$ is strictly convex, its gradient $\nabla \phi$ has positive definite Jacobian matrix, which is also the Hessian of $\phi$. Then by the inverse function theorem, $\nabla \phi$ is invertible. Denote its inverse with $(\nabla \phi)^{-1}$ and the composition of $\PP_\theta$ and $(\nabla \phi)^{-1}$ as $\PP_\theta \circ (\nabla \phi)^{-1}$, then $\EE_{Y \sim \PP_\theta \circ (\nabla \phi)^{-1}} [||Y||_2^2] = \EE_{Y \sim \PP_\theta} [\|\nabla \phi(Y)\|_2^2]$.

Expand $W_{D_\phi} (\QQ, \PP_\theta)$ by the linearity of inner products,
\begin{align*}
	W_{D_\phi} &(\QQ, \PP_\theta)  \\
	&= \inf_{\gamma \in \Pi(\QQ,\PP_\theta)} \int [\phi(x) - \phi(y) - \langle \nabla \phi(y), x-y\rangle] d\gamma(x,y) \\
	&= \EE_{X \sim \QQ} [\phi(X)] - \EE_{Y \sim \PP_\theta} [\phi(Y)] + \EE_{Y \sim \PP_\theta} [\langle\phi(Y), Y \rangle] \\
	&\qquad+ \inf_{\gamma \in \Pi(\QQ,\PP_\theta)} \EE_{X,Y \sim \gamma} [\langle -\nabla \phi(Y), X \rangle] \\
	&= \frac{1}{2}[ \EE_{X \sim \QQ} [\|X\|_2^2] + \EE_{Y \sim \PP_\theta} [\|\nabla \phi(Y)\|_2^2] \\
	&\qquad + \inf_{\gamma \in \Pi(\QQ,\PP_\theta)} \EE_{X,Y \sim \gamma} [\langle -2\nabla \phi(Y), X \rangle]] \\
	&\qquad + \EE_{X \sim \QQ} [\phi(X)] - \EE_{Y \sim \PP_\theta} [\phi(Y)] \\
	&\qquad + \EE_{Y \sim \PP_\theta} [\langle\nabla\phi(Y), Y \rangle] \\
	&\qquad- \frac{1}{2}\left[\EE_{X \sim \QQ} [\|X\|_2^2] + \EE_{Y \sim \PP_\theta} [\|\nabla\phi(Y)\|_2^2]\right] \\
	&= \frac{1}{2} W_2(\QQ, \PP_\theta \circ (\nabla \phi)^{-1})^2 + \EE_{X \sim \QQ} [\phi(X)] \\
	&\qquad- \EE_{Y \sim \PP_\theta} [\phi(Y)] + \EE_{Y \sim \PP_\theta} [\langle\nabla\phi(Y), Y \rangle] \\
	&\qquad- \frac{1}{2}\left[\EE_{X \sim \QQ} [\|X\|_2^2] + \EE_{Y \sim \PP_\theta \circ (\nabla \phi)^{-1}} [||Y||_2^2]\right].
\end{align*}
Therefore we can express the new $W_{D_\phi}(\QQ, \PP_\theta)$ as the distorted squared Wasserstein distance $\frac{1}{2} W_2(\QQ, \PP_\theta \circ (\nabla \phi)^{-1})^2$ plus some error correction terms, which do not depend on the choice of coupling $\gamma$. 

Now it suffices to  show that $W_{L^2}(\bp_r,\bp_\theta)$ is almost everywhere differentiable. First, observe that for two vectors $\theta, \theta' \in \RR^d$, let $\pi$ be the joint distribution of $(g_\theta(Z), g_{\theta'}(Z))$ where $Z \sim p(z)$, then
  \begin{align*}
    W_{L^2}(\bp_\theta,\bp_\theta') &\le  \left[\be_{(X,Y) \sim \pi} [\|X-Y\|_2^2] \right]^{\frac{1}{2}} = \left[ \be \|g_\theta(Z) - g_{\theta'}(Z)\|_2^2 \right]^{\frac{1}{2}}.
  \end{align*}
  The continuity of $g_\theta$ ensures that $\|g_\theta(Z) - g_{\theta'}(Z)\|^2 \to 0 $ point-wise as $\theta \to \theta'$. Since $\XCal$ is compact, $\|g_\theta(Z) - g_{\theta'}(Z)\|^2$ is uniformly bounded. Therefore by the bounded convergence theorem,
  \[
    W_{L^2}(\bp_\theta,\bp_\theta') \le \left [\be \|g_\theta(Z) - g_{\theta'}(Z)\|_2^2 \right]^{\frac{1}{2}} \to 0, \text{ as } \theta \to \theta'.
  \]
  Hence by the triangle inquality, as $\theta \to \theta'$, $
    |W_{L^2}(\bp_r, \bp_\theta) - W_{L^2}(\bp_r, \bp_{\theta'})| \le  W_{L^2}(\bp_\theta, \bp_\theta') \to 0.
  $
  This proves the continuity.
  
  Now assume $g_\theta$ is locally Lipschitz, i.e., for each pair $(\theta,z)$, there exists a constant $L(\theta,z)$ and an open neighborhood $N(\theta,z)$ around $(\theta,z)$ such that $\forall (\theta',z') \in N(\theta,z)$,
  \[
    \|g_\theta(z) - g_{\theta'}(z')\|_2 \le L(\theta,z)(\|\theta - \theta'\|_2 + \|z - z'\|_2).
  \]
  By fixing $z' = z$ and taking expectation of squares of both sides, we get
  \[
    \be \|g_\theta(Z) - g_{\theta'}(Z)\|_2^2 \le \|\theta - \theta'\|_2^2 \be [L(\theta,Z)^2],
  \]
  for all $\theta'$ in an open neighborhood of $\theta$. Therefore,
  \[
    |W_{L^2}(\bp_r, \bp_\theta) - W_{L^2}(\bp_r, \bp_{\theta'})| \le  W_{L^2}(\bp_\theta, \bp_\theta') \le \left[\be \|g_\theta(Z) - g_{\theta'}(Z)\|_2\right]^{\frac{1}{2}} \le \|\theta - \theta'\|_2 \be [L(\theta, Z)^2]^{\frac{1}{2}},
  \]
  i.e., $W_{L^2}(\bp_r, \bp_\theta)$ is locally Lipschitz and by Rademacher's theorem, is differentiable almost everywhere.
\end{proof}

\section{Discussion}

\subsection{DRSO and Ambiguity Set}

Suppose one chooses the divergence between probability distributions to be $d(\PP,\QQ)$, where $\PP$ and $\QQ$ are probability measures defined on the set $\mathcal{X} \subset \RR^n$. Let $\mathcal{M}_+$ denotes the set of all probability distributions defined over the set $\mathcal{X}$. Then the ambiguity set $\mathcal{P}$ can be defined as a ball centered at the nominal distribution $\QQ$:
\[
\mathcal{P} = \{\PP \in \mathcal{M}_+: d(\PP,\QQ) \le \delta\}.
\]

The nominal distribution $\QQ$ may come from prior knowledge of the model, or directly from data. In the data-driven setting where we are given iid samples $\{X_i\}_{i=1}^n$, the nominal distribution $\QQ$ is chosen to be the empirical distribution $\hat{\PP}_n$. 

\begin{itemize}
\item When the sample size $n$ is large (relative to $d$), one can appeal to the asymptotic distribution of $D(p, \hat{p}_n)$ to construct an ambiguity set using Theorem \ref{asym}. More specifically, an ambiguity set can be constructed as follows:
$$\mathcal{P} = \{p : D_\phi(p, \hat{p}_n) \leq \frac{1}{2n}F^{-1}(\alpha)\},$$
where $F^{-1}(\alpha)$ is the quantile function of $\sum_{i = 1}^r \beta_i Z_i^2$, which is a weighted sum of independent $\chi^2$ random variables with one degree of freedom. This quantile can be approximated via a Monte Carlo approximation. For a large $K$ (say $K = 10000$), one can simulate $rK$ independent standard normal random variables $Z_{1, 1}, ..., Z_{1, r}, Z_{2, 1}, ..., Z_{2, r}, ..., Z_{K, 1}, ..., Z_{K, r},$ and compute $R_j = \sum_{i = 1}^r \beta_i Z_{i, j}^2$ for each $j = 1, ... K$. Then one can use take the $\alpha$-th empirical quantile of $(R_1, ..., R_K)$ as an approximation to $F^{-1}(\alpha)$. Note that $\mathcal{P}$ is convex since Bregman divergence is convex with respect to the first argument.
\item When the sample size $n$ is of moderate size or small, one must appeal to concentration results to obtain a valid ambiguity set. In order to apply Theorem \ref{nonconvexConcentration} or Theorem \ref{convexConcentration} for the construction of the ambiguity set, we have to first derive an upper bound for $\EE D_\phi(p, \hat{p}_{n})$ or $\EE D_\phi(\hat{p}_{n}, p)$, respectively.

For $\EE D_\phi(p, \hat{p}_{n})$, clearly
\begin{align*}
	\EE &D_\phi(p, \hat{p}_{n}) = \EE[\phi(p) - \phi(\hat{p}_{n}) - \langle \nabla\phi(\hat{p}_{n}), p - \hat{p}_{n}\rangle] \\
	&= \EE[\phi(p) - \phi(\hat{p}_{n})] - \EE[\langle \nabla\phi(p) - \nabla\phi(\hat{p}_{n}), p - \hat{p}_{n}\rangle] \\
	&\le M_{\phi}  \sqrt{\sum_{i=1}^d \frac{p_i(1-p_i)}{n}} + L_\phi \EE\|p - \hat{p}_n\|_2^2 \\
	&= M_{\phi}  \sqrt{\sum_{i=1}^d \frac{p_i(1-p_i)}{n}} + L_\phi \sum_{i=1}^d \frac{p_i(1-p_i)}{n}\\
	&= M_{\phi}   \sqrt{\frac{d}{4n}} + L_\phi \frac{d}{4n},
\end{align*}
where the inequality is by the Cauchy-Schwarz inequality and the Taylor's theorem.

Similarly, for $\EE D_\phi(\hat{p}_{n}, p)$,
\begin{align*}
	\EE \left[D_\phi(\hat{p}_{n}, p)\right] 
	&= \EE [ \phi(\hat{p}_{n}) - \phi(p)] \\
	&= \EE [\langle\nabla \phi(\xi), \hat{p} - p\rangle] \\
	&\le \EE [\|\nabla \phi(\xi)\|\|\hat{p} - p\|_2] \\
	&\le M_{\phi}  \EE [\|\hat{p} - p\|_2] \\
	&\le M_{\phi}  \sqrt{\EE [\|\hat{p} - p\|_2^2]} \\
	&= M_{\phi}  \sqrt{\sum_{i=1}^d \frac{p_i(1-p_i)}{n}}\\
	&\leq M_{\phi}   \sqrt{\frac{d}{4n}},
\end{align*}
where $\xi$ is between $\hat{p}$ and $p$, the first inequality is by Cauchy-Schwarz, and the third inequality is by the Jensen's inequality. 
\item As described immediately after the proof of Theorem \ref{nonconvexConcentration}, the resulting ambiguity set might be intractable to be computed because of its potential nonconvex nature. On the other hand, Theorem \ref{convexConcentration} 
results in a convex ambiguity set, which is the Bregman ball of $\hat{p_n}$ with radius $M_\phi \sqrt{\frac{d}{4n}} + L_\phi \left(\frac{d}{4n}\right) + \epsilon$:
$$\left\{p: D_\phi(p, \hat{p_n}) \leq M_{\phi}  \sqrt{\frac{d}{4n}} + L_\phi \left(\frac{d}{4n}\right) + \epsilon \right\}.$$
\end{itemize}

\subsection{Distribution Learning}

For distribution learning, the Wasserstein-Bregman divergence $W_{D_\phi}$ can be served as the objective function in the optimization problem. As shown in the proof of Theorem 3.4, the Wasserstein-Bregman divergence $W_{D_\phi}$ has an interesting decomposition in terms of squared divergence plus a penalty term: 
$$W_{D_\phi}(\QQ, \PP_\theta) = D + P,$$
where
\begin{align*}
	D &=\frac{1}{2} W_2(\QQ, \PP_\theta \circ (\nabla \phi)^{-1})^2 \\
	P &= \EE_{X \sim \QQ} [\phi(X)] - \EE_{Y \sim \PP_\theta} [\phi(Y)] + \EE_{Y \sim \PP_\theta} [\langle\nabla\phi(Y), Y \rangle] \\
	&\qquad- \frac{1}{2}\left[\EE_{X \sim \QQ} [\|X\|_2^2] + \EE_{Y \sim \PP_\theta \circ (\nabla \phi)^{-1}} [||Y||_2^2]\right].
\end{align*}

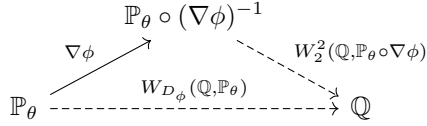
\begin{figure}
\[
\begin{tikzcd}
 & \PP_\theta \circ (\nabla \phi)^{-1} \arrow[dr, dashrightarrow]{d}{W_{2}^2(\QQ, \PP_\theta \circ \nabla \phi)} \\
 \PP_\theta \arrow{ur}{\nabla \phi} \arrow[rr, dashrightarrow]{d}{W_{D_\phi}(\QQ, \PP_\theta)} &&  \QQ
\end{tikzcd}
\]
\caption{Schematic diagram of the decomposition of $W_{D_\phi}$. The solid arrow denotes transformation. The dashed arrow denotes divergence measure.}
\label{fig:schematic}
\end{figure}

From a high-level perspective, optimizing the Wasserstein-Bregman divergence between two distributions is basically optimizing the $L_2$-Wasserstein distance between one of the distribution and a $\nabla \phi$-transformed distribution (the $D$ term), with a penalty term accounting for the influence of $\phi$ in the divergence measure (the $P$ term). See Figure \ref{fig:schematic}.

By considering Wasserstein-Bregman divergence, we retain the choice of choosing a symmetric measure (say, $D_\phi(x, y) = ||x - y||_2^2$) or choosing an asymmetric measure (say, $D_\phi(x , y) = \sum_{i = 1}^d x_i \log\left(\frac{x_i}{y_i}\right) - \sum_{i = 1}^d (x_i - y_i)$). In particular, in the special case when the Bregman divergence is chosen to be the $L_2$ distance, we get $L_2$-Wasserstein distance, as in \cite{Arjovsky:2017vh}. In contrast, all Wasserstein distances are always symmetric, since a metric is used within the definition.

\subsection{Future Work}

It remains an open problem on how to select the underlying convex function $\phi$ in Bregman divergence for a given problem. Proposition \ref{FisherInfo} provides some insight on how $\phi$ is selected. Based on the proposition, the amount of information containing in $p$ depends on the curvature of $\phi$. Choosing $\phi(x) = ||x||_2^2$ can be somewhat conservative in the sense that the amount of information is independent of the value of $p$. 

On the other hand, it remains to be further investigated as to the definite advantage of replacing the metric $d(x, y)$ in the Wasserstein distance by the Bregman divergence $D_{\phi}(x, y)$. Nevertheless, Wasserstein-Bregman divergence can serve as a viable candidate for measuring distributional divergence when asymmetry is desirable.


\newpage

\bibliography{paper.bib}
\bibliographystyle{plainnat}
\end{document}